\newcommand{\nc}{\newcommand}
\nc{\DMO}{\DeclareMathOperator}
\newcommand{\N}{\mathbb{N}}
\newcommand{\cH}{\mathcal{H}}
\newcommand{\MX}{\mathcal{X}}
\newcommand{\One}{\mathbbm{1}}
\newcommand{\MH}{\cH}
\newcommand{\MG}{\mathcal{G}}
\newcommand{\ra}{\rightarrow}
\DeclareMathOperator{\Tdim}{Tdim}
\newcommand{\MF}{\mathcal{F}}
\newcommand{\MC}{\mathcal{J}}
\renewcommand{\MC}{\mathcal{C}}
\renewcommand{\^}[1]{^{(#1)}}
\newcommand{\MT}{\mathcal{T}}
\nc{\ep}{\epsilon}
\nc{\bx}{\mathbf{x}}
\nc{\bv}{\mathbf{v}}
\nc{\MN}{\mathcal{N}}
\DMO{\Ldim}{Ldim}
\nc{\BN}{\mathbb{N}}
\nc{\bw}{\mathbf{w}}
\nc{\noah}[1]{\textcolor{purple}{[Noah: #1]}}
\nc{\MV}{\mathcal{V}}
\nc{\MW}{\mathcal{W}}
\newtheorem{theorem}{Theorem}[section]
\newtheorem{observation}[theorem]{Observation}
\newtheorem{proposition}[theorem]{Proposition}
\newtheorem{lemma}[theorem]{Lemma}
\newtheorem{defn}[theorem]{Definition}
\title{Near-tight closure bounds for Littlestone and threshold dimensions}
\date{\today}
\author{Badih Ghazi\thanks{Google Research, Mountain View. \texttt{badihghazi@gmail.com, ravi.k53@gmail.com, pasin@google.com}.} \hspace*{0.5cm}
Noah Golowich\thanks{This work was done while interning at Google Research. MIT EECS. Supported at MIT by a Fannie \& John Hertz Foundation Fellowship, an MIT Akamai Fellowship, and an NSF Graduate Fellowship.   \texttt{nzg@mit.edu}.} \hspace*{0.5cm} 
Ravi Kumar\footnotemark[1]
\hspace*{0.5cm} 
Pasin Manurangsi\footnotemark[1]}
\begin{document}

\maketitle

\begin{abstract}
We study closure properties for the Littlestone and threshold dimensions of binary hypothesis classes. Given classes $\mathcal{H}_1, \ldots, \mathcal{H}_k$ of Boolean functions with bounded Littlestone (respectively, threshold) dimension, we establish an upper bound on the Littlestone (respectively, threshold) dimension of the class defined by applying an arbitrary binary aggregation rule to $\mathcal{H}_1, \ldots, \mathcal{H}_k$. We also show that our upper bounds are nearly tight. Our upper bounds give an exponential (in $k$) improvement upon analogous bounds shown by Alon et al. (COLT 2020), thus answering a question posed by their work.
\end{abstract}

\section{Introduction}
 Let $X$ be a set and $\MH_1, \ldots, \MH_k$ be hypothesis classes consisting of binary classifiers $h : X \ra \{0,1\}$; for instance, each of $\MH_1, \ldots, \MH_k$ may be a collection of experts. Given an arbitrary {\it aggregation rule} $G : \{0,1\}^k \ra \{0,1\}$ (e.g., the majority vote among the $k$ experts), we study the maximum possible complexity of the class $G(\MH_1, \ldots, \MH_k)$, defined as the set of all classifiers $x \mapsto G(h_1(x), \ldots, h_k(x))$ for some choices $h_1 \in \MH_1, \ldots, h_k \in \MH_k$, as a function of the complexities of $\MH_1, \ldots, \MH_k$. 

Such a closure property has long been known when complexity is measured via the VC dimension: Dudley \cite{dudley_central_1978} showed that if the VC dimension of each of $\MH_1, \ldots, \MH_k$ is at most $d$, then the VC dimension of $G(\MH_1, \ldots, \MH_k)$, is at most $O(dk \log k)$. Recently Alon et al.~\cite{alon_closure_2020} proved similar, but quantitatively weaker, closure properties for the {\it Littlestone dimension} \cite{littlestone_learning_1988} (Definition \ref{def:ldim}), which characterizes online learnability of a class \cite{ben-david_agnostic_2009}, and {\it threshold dimension} \cite{shelah_classification_1978,hodges_shorter_1997} (Definition \ref{def:tdim}), which is known to be exponentially related to Littlestone dimension and was used by Alon et al.~\cite{alon_private_2019} to show that privately PAC-learnable classes are online learnable (i.e., have finite Littlestone dimension). The upper bounds of \cite{alon_closure_2020} exhibit an exponential dependence on $k$, and it was asked in \cite{alon_closure_2020} whether this dependence could be improved. Our main contribution is to resolve this question in the affirmative, proving tighter upper bounds with a nearly linear dependence on $k$ 
and to show that this is nearly the best possible. In particular:
\begin{enumerate}
\item When the Littlestone dimension of each of  $\MH_1, \ldots, \MH_k$ is at most $d$, we show that the Littlestone dimension of $G(\MH_1, \ldots, \MH_k)$ is at most $O(dk \log k)$ (Proposition \ref{prop:ldim-ub}), improving upon the bound of $\tilde O(2^{2k} k^2 d)$ of \cite{alon_closure_2020}. Moreover, our upper bound is tight up to the $\log k$ factor (Observation \ref{obs:ldim-lb}).
\item When the threshold dimension of each of $\MH_1, \ldots, \MH_k$ is at most $d$, we show that the threshold dimension of $G(\MH_1, \ldots, \MH_k)$ is at most $2^{O(dk \log k)}$ (Proposition \ref{prop:tdim-ub}), and that it can be at least $2^{\Omega(dk)}$ (Proposition \ref{prop:tdim-lb}). These bounds improve upon the upper and lower bounds of $2^{2^{O(k)}d}$ and $2^{\Omega(d)}$, respectively, shown in  \cite{alon_closure_2020}. 
\end{enumerate}


\section{Closure bounds for Littlestone dimension}
\subsection{Preliminaries}
In this section we mostly follow the notation of \cite{rakhlin_statistical_2014,rakhlin_online_2015}.
For a positive integer $n$, we use $[n]$ to denote $\{1, \dots, n\}$.
For a positive integer $t$ and a sequence $\ep_1, \ep_2, \ldots, \ep_t, \ldots$, we let $\ep_{1:t}$ denote the tuple $(\ep_1, \ldots, \ep_t)$. As a convention let $\ep_{1:0}$ denote the empty sequence. Let $\{0,1\}^X$ be the set of all classifiers $f : X \ra \{0,1\}$.

For a set $X$, an {\it $X$-valued tree $\bx$} of depth $n$ is a collection of functions $\bx_t : \{-1,1\}^{t-1} \ra X$ for $1 \leq t \leq n$. Consider a binary hypothesis class $\MF \subseteq \{0,1\}^X$. 
The class $\MF$ is said to {\it shatter} a tree $\bx$ of depth $n$ if
$$
\forall (\ep_1, \ldots, \ep_n) \in \{-1,1\}^n, \ \exists f \in \MF \ \text{ s.t. } \ f(\bx_t(\ep_{1:t-1})) = \frac{\ep_t+1}{2} \ \forall t \in [n].
$$
\begin{defn}[Littlestone dimension]
\label{def:ldim}
The {\normalfont Littlestone dimension} of a class $\MF$, denoted $\Ldim(\MF)$, is the depth of the largest $X$-valued binary tree $\bx$ that is shattered by $\MF$. 
\end{defn}
A set $\MV$ of $\{0,1\}$-valued trees of depth $n$ is called a {\it 0-cover} for $\MF$ on a given $X$-valued tree $\bx$ of depth $n$ if:
$$
\forall f \in \MF, \ \forall (\ep_1, \ldots, \ep_{n-1}) \in \{-1,1\}^{n - 1}, \ \exists \bv \in \MV \ \ \text{ s.t. } f(\bx_t(\ep_{1:t-1})) = \bv_t(\ep_{1:t-1}) \ \forall t \in [n].
$$
The {\it 0-covering number} of $\MF$ on the tree $\bx$ is defined as:
$$
\MN_0(\MF, \bx) := \min \left\{ |\MV| : \text{$\MV$ is a 0-cover for $\MF$ on $\bx$} \right\}.
$$

\begin{lemma}[\cite{rakhlin_sequential_2014}, Theorem 7; ``Sauer--Shelah lemma for 0-covering number in trees'']
  \label{lem:sauer}
  For any $X$-valued tree $\bx$ of depth $n$, we have
  $$
\MN_0(\MF, \bx) \leq \sum_{i=0}^d {n \choose i} \leq \left( \frac{en}{d} \right)^d,
$$
when $\Ldim(\MF) = d < \infty$.
\end{lemma}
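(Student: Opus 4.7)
The second inequality $\sum_{i=0}^{d}\binom{n}{i} \le (en/d)^d$ is a standard binomial estimate, so I focus on the first inequality, which is a tree-analog of the classical Sauer--Shelah lemma. My plan is to prove it by induction on the depth $n$ of the tree $\bx$ (with a side induction on $d$).

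For the base cases, $n = 0$ gives $\MN_0(\MF,\bx) = 1 = \binom{0}{0}$, and $d = 0$ forces all of $\MF$ to coincide on every point of $X$ (otherwise a depth-$1$ tree would be shattered), so $\MN_0(\MF,\bx) = 1$ as well.

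For the inductive step, let $x_1 := \bx_1$ denote the root and let $\bx^-, \bx^+$ be the two depth-$(n{-}1)$ subtrees. I would partition $\MF = \MF_0 \sqcup \MF_1$ by the value at $x_1$ and introduce the auxiliary ``doubled'' class
$$\MF_\cap := \{\, g \in \{0,1\}^X : \text{both } g|_{x_1 \mapsto 0} \text{ and } g|_{x_1 \mapsto 1} \text{ lie in } \MF\,\},$$
where $g|_{x_1 \mapsto b}$ denotes $g$ with its value at $x_1$ overwritten to $b$. A standard prepending argument shows $\Ldim(\MF_\cap) \le d - 1$: if $\MF_\cap$ shattered an $X$-valued tree of depth $d$, then placing $x_1$ at a new root with two copies of that tree as its subtrees would yield a depth-$(d{+}1)$ tree shattered by $\MF$, contradicting $\Ldim(\MF) = d$. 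This is the exact tree analog of the Sauer--Shelah ``doubling'' step.

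The plan is then to establish a recursive cover inequality of the form
$$\MN_0(\MF, \bx) \;\le\; \MN_0(\MF, \bx^-) \;+\; \MN_0(\MF_\cap, \bx^+),$$
(or a symmetric variant), which, combined with the inductive hypothesis applied to the two right-hand terms and Pascal's identity $\binom{n}{i} = \binom{n-1}{i} + \binom{n-1}{i-1}$, yields $\MN_0(\MF,\bx) \le \sum_{i=0}^d \binom{n-1}{i} + \sum_{i=0}^{d-1}\binom{n-1}{i} = \sum_{i=0}^d \binom{n}{i}$ and closes the induction. The main obstacle is the explicit cover construction justifying this recursion: one must combine a $0$-cover of $\MF$ on $\bx^-$ (extended to depth-$n$ trees with appropriately chosen root values and right subtrees, handling the ``solo'' functions in $\MF_0 \cup \MF_1$ that have no twin across $x_1$) with a $0$-cover of $\MF_\cap$ on $\bx^+$ (used via trees whose root value can be set to either $0$ or $1$ to cover the doubled functions appearing simultaneously in $\MF_0$ and $\MF_1$), while checking that every pair $(f, \ep)$ is covered and that no tree is double-counted. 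The delicacy of aligning the root values and the two subtree covers is the crux of the argument.
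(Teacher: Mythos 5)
The paper does not prove Lemma~\ref{lem:sauer}; it cites it directly from Rakhlin, Sridharan, and Tewari. So there is no internal proof to compare against. Evaluating your argument on its own terms: the setup (induction on $n$, base cases, and the analysis $\Ldim(\MF_\cap)\le d-1$ for the doubled class) is fine, but the proposed recursion
$$
\MN_0(\MF,\bx)\;\le\;\MN_0(\MF,\bx^-)\;+\;\MN_0(\MF_\cap,\bx^+)
$$
is false, so the ``crux'' you flag as unresolved is in fact a dead end. Here is a counterexample. Take $X=\{x_1,x_2,x_3\}$, a depth-$2$ tree $\bx$ with root $x_1$, left child $x_2$, right child $x_3$, and $\MF=\{(0,0,0),(0,1,1),(1,0,1),(1,1,0)\}$ (listing the values on $x_1,x_2,x_3$). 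Then $\MF$ shatters $\bx$, $\Ldim(\MF)=2$, and a direct check shows $\MN_0(\MF,\bx)=4$: a tree with root value $0$ in the cover is needed for each of left-value $0$, left-value $1$, right-value $0$, right-value $1$, forcing two trees with root $0$, and symmetrically two trees with root $1$. But $\MN_0(\MF,\bx^-)=2$ (the trace on $x_2$ is $\{0,1\}$), and $\MF_\cap=\emptyset$ (the functions with $x_1$-value $0$ have $(x_2,x_3)$-values in $\{(0,0),(1,1)\}$ while those with $x_1$-value $1$ have $(x_2,x_3)$-values in $\{(0,1),(1,0)\}$; no $g$ can belong to both after overwriting at $x_1$), so $\MN_0(\MF_\cap,\bx^+)=0$. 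Your recursion would give $\MN_0(\MF,\bx)\le 2+0=2$, contradicting the true value $4$. The structural point your recursion misses is that a $0$-cover tree carries a \emph{root value}, so the cover elements with root $0$ must by themselves cover $\MF_0=\{f\in\MF:f(\bx_1)=0\}$ on both subtrees, and similarly for root $1$; knowing a cover of $\MF$ on $\bx^-$ (which forgets root values entirely) plus a cover of the small intersection class is not enough information.

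The standard argument, and I believe the one in the cited reference, splits by root value rather than by doubling. One shows the exact identity
$$
\MN_0(\MF,\bx)\;=\;\max\bigl(\MN_0(\MF_0,\bx^-),\MN_0(\MF_0,\bx^+)\bigr)\;+\;\max\bigl(\MN_0(\MF_1,\bx^-),\MN_0(\MF_1,\bx^+)\bigr),
$$
where $\MF_b=\{f\in\MF:f(\bx_1)=b\}$: the $\ge$ direction comes from projecting a cover onto the root-$0$ and root-$1$ parts, and the $\le$ direction from pairing up a left-subtree cover and a right-subtree cover of $\MF_b$ (padding the shorter list by repeating elements) under a common root value $b$, which works because a single cover tree only has to match $f$ along one root-to-leaf path at a time. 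Then one observes that not both $\MF_0$ and $\MF_1$ can have Littlestone dimension $d$ with respect to a tree rooted at $\bx_1$, so w.l.o.g.\ $\Ldim(\MF_0)\le d-1$ while $\Ldim(\MF_1)\le d$, and induction plus Pascal's identity gives $\MN_0(\MF,\bx)\le\sum_{i\le d-1}\binom{n-1}{i}+\sum_{i\le d}\binom{n-1}{i}=\sum_{i\le d}\binom{n}{i}$. Your Pascal bookkeeping would close in exactly the same way once the recursion is replaced by the one above.
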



\subsection{Improved bounds}

Let $X$ be a set. 
For a function $G : \{0,1\}^k \ra \{0,1\}$ and classifiers $h_1, \ldots, h_k : X \ra \{0,1\}$, let $G(h_1, \ldots, h_k) : X \ra \{0,1\}$ be the mapping $x \mapsto G(h_1(x), \ldots, h_k(x))$. Then for binary hypothesis classes $\MH_1, \ldots, \MH_k \subseteq \{0,1\}^X$, we define
$$
G(\MH_1, \ldots, \MH_k) := \{ G(h_1, \ldots, h_k) : h_i \in \MH_i \}.
$$

In Proposition \ref{prop:ldim-ub}, we prove an upper bound for $\Ldim(G(\MH_1, \ldots, \MH_k))$ in terms of $\max_{1 \leq j \leq k} \Ldim(\MH_j)$ that grows nearly linearly with $k$. 
The proof follows as a consequence of the bound on the 0-covering number given by the  Sauer--Shelah lemma for trees (Lemma \ref{lem:sauer}), in a similar manner to Dudley's \cite[Proposition 7.12]{dudley_central_1978} proof of a closure property for VC classes using the classic Sauer--Shelah lemma \cite{sauer_density_1972,vapnik_uniform_1968}. 
In Section 2.1.1 of \cite{alon_closure_2020}, the authors state that they are not aware of a proof of Proposition \ref{prop:ldim-ub}  using the related definition of {\it thicket shatter function}. We discuss the relation between 0-covering number and thicket shatter function further in Section \ref{sec:thicket-shattering}. 
\begin{proposition}
\label{prop:ldim-ub}
  Let $G : \{0,1\}^k \ra \{0,1\}$ be a Boolean function, let $\MH_1, \ldots, \MH_k \subseteq \{0,1\}^X$ be binary hypothesis classes, and let $d \in \BN$ be such that $\Ldim(\MH_i) \leq d$ for all $i \in [k]$. Then
  $$
\Ldim(G(\MH_1, \ldots, \MH_k)) \leq  O(kd\log k).
  $$
\end{proposition}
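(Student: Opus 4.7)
The plan is to follow the tree-based analogue of Dudley's classical proof for VC classes, using the Sauer--Shelah lemma for trees (Lemma \ref{lem:sauer}) to upper bound the 0-covering number of $G(\MH_1, \ldots, \MH_k)$ on an arbitrary shattered tree, and then observing that shattering forces the 0-covering number to be large. Throughout, fix an $X$-valued tree $\bx$ of depth $n$ that is shattered by $G(\MH_1, \ldots, \MH_k)$; the goal is to bound $n$.

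The main step is to show that 0-covers compose under $G$. By Lemma \ref{lem:sauer} applied to each $\MH_i$, there exists a 0-cover $\MV_i$ of $\MH_i$ on $\bx$ with $|\MV_i| \leq (en/d)^d$. For any tuple $(\bv_1, \ldots, \bv_k) \in \MV_1 \times \cdots \times \MV_k$, let $G(\bv_1, \ldots, \bv_k)$ denote the $\{0,1\}$-valued tree obtained by applying $G$ node-wise to the $k$-tuple of labels. I claim that
\[ \MW := \{ G(\bv_1, \ldots, \bv_k) : \bv_i \in \MV_i \text{ for all } i \in [k] \} \]
is a 0-cover of $G(\MH_1, \ldots, \MH_k)$ on $\bx$: given $f = G(h_1, \ldots, h_k)$ and $\ep_{1:n-1} \in \{-1,1\}^{n-1}$, choose $\bv_i \in \MV_i$ that matches $h_i$ along the path $\ep_{1:n-1}$; then $G(\bv_1, \ldots, \bv_k)$ matches $f$ along that same path by pointwise application of $G$. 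This gives
\[ \MN_0(G(\MH_1, \ldots, \MH_k), \bx) \leq |\MW| \leq \prod_{i=1}^k |\MV_i| \leq \left( \frac{en}{d} \right)^{dk}. \]

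The complementary step is to show that shattering forces every 0-cover of $G(\MH_1, \ldots, \MH_k)$ on $\bx$ to have size at least $2^n$. For each $\ep \in \{-1,1\}^n$, shattering provides $f_\ep \in G(\MH_1, \ldots, \MH_k)$ realizing the labeling $\ep$ on $\bx$, and the 0-cover condition applied to $(f_\ep, \ep_{1:n-1})$ produces some $\bv^{(\ep)} \in \MW$ satisfying $\bv^{(\ep)}_t(\ep_{1:t-1}) = (\ep_t+1)/2$ for every $t \in [n]$. From $\bv^{(\ep)}$ alone one recovers $\ep$ by greedily reading $\ep_1 = 2\bv^{(\ep)}_1(\emptyset) - 1$, then $\ep_2 = 2\bv^{(\ep)}_2(\ep_1) - 1$, and so on, so $\ep \mapsto \bv^{(\ep)}$ is injective, giving $|\MW| \geq 2^n$.

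Combining the two bounds yields $2^n \leq (en/d)^{dk}$, equivalently $n \leq dk \log_2(en/d)$, and a standard elementary argument (setting $r = n/d$ and solving $r \leq k(1 + \ln r)$) gives $n = O(dk \log k)$. I do not expect any real obstacle: the only conceptual content is the pointwise composition of 0-covers, which is immediate once one thinks of each $\bv \in \MV_i$ as specifying $h_i$'s labels along all paths simultaneously.
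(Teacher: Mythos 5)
Your proof is correct and follows essentially the same route as the paper: compose 0-covers of the $\MH_i$ under $G$, bound the resulting cover's size via the tree Sauer--Shelah lemma, and lower-bound any 0-cover of a shattered tree by $2^n$ to conclude. The only cosmetic difference is in the last of these steps: the paper establishes the $2^n$ lower bound via Lemma~\ref{lem:lower-bound-shattering} (a lexicographic-ordering argument giving the more general inequality $\rho(\MF,\bx)\le \MN_0(\MF,\bx)$), whereas you give a direct injectivity argument by greedily reconstructing $\ep$ from $\bv^{(\ep)}$ along the root-to-leaf path; the two are the same idea phrased differently, and both suffice.
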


Before proving Proposition \ref{prop:ldim-ub} we state the following lemma, which will be established as a corollary of Lemma \ref{lem:lower-bound-shattering} in Section \ref{sec:thicket-shattering}.
\begin{lemma}
\label{lem:lower-bound-shattering-cor}
  Suppose that $\MF \subset \{0,1\}^X$ shatters a tree $\bx$ of depth $n$. Then any 0-cover $\MV$ for $\MF$ on the tree $\bx$ has size at least $2^n$.
\end{lemma}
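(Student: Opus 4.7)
The plan is to construct an injective map from the $2^n$ sign-sequences $\ep \in \{-1,1\}^n$ into $\MV$, which immediately forces $|\MV| \geq 2^n$. The key observation is that although the 0-cover is indexed by the $2^{n-1}$ paths of length $n-1$, each $\bv \in \MV$ only encodes a single label at each internal node along any chosen path, so the extra last sign in a depth-$n$ sequence can distinguish covering trees.

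First, I would use the hypothesis that $\MF$ shatters $\bx$ to pick, for every $\ep = (\ep_1,\ldots,\ep_n) \in \{-1,1\}^n$, a function $f_\ep \in \MF$ such that $f_\ep(\bx_t(\ep_{1:t-1})) = (\ep_t+1)/2$ for all $t \in [n]$. Then I would apply the 0-cover property to the pair $(f_\ep, \ep_{1:n-1})$ to obtain some $\bv(\ep) \in \MV$ with $\bv(\ep)_t(\ep_{1:t-1}) = f_\ep(\bx_t(\ep_{1:t-1})) = (\ep_t+1)/2$ for all $t \in [n]$. This defines a map $\Phi : \{-1,1\}^n \to \MV$ given by $\Phi(\ep) = \bv(\ep)$.

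The core of the argument is to verify that $\Phi$ is injective. Suppose $\Phi(\ep) = \Phi(\ep') = \bv$ for distinct $\ep \neq \ep'$, and let $j \in [n]$ be the first coordinate where they differ, so $\ep_{1:j-1} = \ep'_{1:j-1}$. Then by construction
$$
\bv_j(\ep_{1:j-1}) = \frac{\ep_j + 1}{2} \quad \text{and} \quad \bv_j(\ep'_{1:j-1}) = \frac{\ep'_j + 1}{2}.
$$
Since $\bv_j$ is a single function and the two arguments coincide, we get $\ep_j = \ep'_j$, contradicting the choice of $j$. Hence $\Phi$ is injective and $|\MV| \geq 2^n$.

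I do not anticipate a serious obstacle: the lemma is essentially an unpacking of the definitions of shattering and 0-cover. The only subtlety worth being careful about is the asymmetry in the definition of a 0-cover, where the quantification is over paths of length $n-1$ but the agreement requirement is imposed for all $t \in [n]$ — this is exactly what makes the last sign $\ep_n$ available to tell covering trees apart in the ``$j = n$'' case of the injectivity argument.
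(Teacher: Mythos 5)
Your proof is correct and is essentially the same argument as the paper's. The paper factors the statement through the slightly more general Lemma~\ref{lem:lower-bound-shattering}, which shows $\rho(\MF,\bx)\leq\MN_0(\MF,\bx)$ for the thicket shatter function and then specializes to the shattering case; but the injectivity mechanism you use --- for two sign sequences mapped to the same covering tree $\bv$, examine the first index $j$ of disagreement, note $\ep_{1:j-1}=\ep'_{1:j-1}$, and derive a contradiction from the single value $\bv_j(\ep_{1:j-1})$ --- is precisely the comparison the paper carries out (phrased there via the lexicographic ordering and the ``smallest $t_0$ with $\ep'_{t_0}<\ep_{t_0}$'').
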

\begin{proof}[Proof of Proposition \ref{prop:ldim-ub}]
  It is without loss of generality to assume $d \geq 3$. 
  Let us write $N = \Ldim(G(\MH_1, \ldots, \MH_k))$. Let $\bx$ be an $X$-valued complete binary tree of depth $N$ that is shattered by $\MF$. By Lemma \ref{lem:sauer}, for each $i \in [k]$, since $\Ldim(\MH_i) \leq d$ for each $i$, we have
  $$
\MN_0(\MH_i, \bx) \leq \sum_{i=0}^d {N \choose i} \leq \left( \frac{eN}{d} \right)^d. 
  $$
Now, for each $i \in [k]$, let $\MV_i$ be a 0-cover for $\MH_i$ on $\bx$ of size $|\MV_i| \leq (eN/d)^d$.

We next construct a 0-cover for $G(\MH_1, \ldots, \MH_k)$ of size at most $\prod_{i=1}^k |\MV_i|$ as follows: for each tuple $\tau = (\bv\^1, \ldots, \bv\^k) \in \MV_1 \times \cdots \times \MV_k$, construct a tree $\bw\^\tau$ defined by
\begin{align*}
\bw\^{\tau}_t(\ep_{1:t-1}) := G(\bv\^1_t(\ep_{1:t-1}), \ldots, \bv\^k_t(\ep_{1:t-1})) & & \forall (\ep_1, \dots, \ep_{N-1}) \in \{-1, 1\}^N, t \in [N].
\end{align*}
To see that the collection $\MW := \{\bw\^\tau\}_{\tau \in \MV_1 \times \cdots \times \MV_k}$ indeed forms a 0-cover, consider any $g \in G(\MH_1, \ldots, \MH_k)$. Then there are $h_1 \in \MH_1, \ldots, h_k \in \MH_k$ so that $g(x) = G(h_1(x), \ldots, h_k(x))$. Also fix any sequence $(\ep_1, \ldots, \ep_{N-1}) \in \{-1,1\}^{N - 1}$. Since $\MV_i$ is a 0-cover for $\MH_i$ on $\bx$, for each $i \in [k]$, there is some $\bv\^i \in \MV_i$ so that for all $t \in [N]$, $h_i(\bx_t(\ep_{1:t-1})) = \bv\^i_t(\ep_{1:t-1})$. Thus, for $\tau = (\bv\^1, \ldots, \bv\^k)$, for each $t \in [N]$, we have
$$
\bw_t\^\tau(\ep_{1:t-1}) = G(\bv\^1_t(\ep_{1:t-1}), \ldots, \bv\^k_t(\ep_{1:t-1})) = G(h_1(\bx_t(\ep_{1:t-1})), \ldots, h_k(\bx_t(\ep_{1:t-1}))) = g(\bx_t(\ep_{1:t-1})).
$$
Hence $\MW$ is indeed a 0-cover of $G(\MH_1, \ldots, \MH_k)$ on $\bx$.

Since $\bx$ is shattered by $G(\MH_1, \ldots, \MH_k)$, by Lemma \ref{lem:lower-bound-shattering-cor}, we have that $|\MW| \geq 2^N$. Summarizing,
$$
2^N \leq |\MW| \leq (eN/d)^{kd},
$$
so $N \leq kd \log(eN/d)$, i.e., $N \leq  O(kd\log k)$. 
\end{proof}
We remark that an alternative way to upper bound $\Ldim(G(\MH_1, \ldots, \MH_k))$ in the context of Proposition \ref{prop:ldim-ub} is to use Proposition 9 and Corollary 6 of \cite{rakhlin_online_2015}. In particular, Corollary 6 of \cite{rakhlin_online_2015} gives a similar closure property for the sequential Rademacher complexities, and Proposition 9 of \cite{rakhlin_online_2015} implies that sequential Rademacher complexities are closely related to Littlestone dimension. However, this technique would give an upper bound of $O(\log^4(kd) \cdot k^2d)$, which is worse than that of Proposition \ref{prop:ldim-ub}.

We next point out that known Littlestone dimension lower bounds for the class of $k$-literal monotone disjunctions imply that our bound is the best possible up to a logarithmic (in $k$) factor (for the regime of parameter $k \leq 2^{0.9d}$):
\begin{observation} \label{obs:ldim-lb}
For any positive integers $k,d$, there is a domain $X$ and a class $\MH \subset \{0,1\}^X$ so that:
\begin{enumerate}
\item $\Ldim(\MH) \leq d$.
\item Defining $G : \{0,1\}^k \ra \{0,1\}$ to be the $k$-wise OR function, $\Ldim(G(\MH, \ldots, \MH)) \geq k \lfloor d - \log k\rfloor$. 
\end{enumerate}
\end{observation}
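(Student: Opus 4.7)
The plan is to take $\MH$ to be the class of single literals over $N := k \cdot 2^m$ Boolean variables, where $m := \lfloor d - \log_2 k \rfloor$; one may assume $m \geq 1$, as otherwise the conclusion is vacuous. Concretely, set $X := \{0,1\}^N$ and $\MH := \{x \mapsto x_i : i \in [N]\}$. Then $G(\MH, \ldots, \MH)$ contains every $k$-literal monotone disjunction over the $N$ variables. The upper bound $\Ldim(\MH) \leq d$ is immediate from $\Ldim(\MH) \leq \lfloor \log_2 |\MH| \rfloor = \lfloor m + \log_2 k \rfloor \leq d$, where the first inequality uses that shattering a depth-$n$ tree forces $|\MH| \geq 2^n$ (distinct sign sequences require distinct consistent functions) and the last uses $m \leq d - \log_2 k$.

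For the lower bound, I would partition $[N]$ into $k$ disjoint blocks $B_1, \ldots, B_k$ of size $2^m$ each. Within each block $B_b$, the standard binary-search construction yields an $X$-valued tree $\bx\^b$ of depth $m$ shattered by $\{x \mapsto x_i : i \in B_b\}$, where moreover every node $x$ of $\bx\^b$ has $x_j = 0$ for all $j \notin B_b$ (the usual binary search over $B_b$ only needs to set coordinates within $B_b$; one simply sets the remaining coordinates to $0$). I then concatenate these to build a tree $\bx$ of depth $km$: the top $m$ levels are $\bx\^1$, at each leaf a fresh copy of $\bx\^2$ is grafted on, and so on through $\bx\^k$. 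Formally, for $t \in [km]$ write $b = \lceil t/m \rceil$ and set $\bx_t(\ep_{1:t-1}) := \bx\^b_{t-(b-1)m}(\ep_{(b-1)m+1 : t-1})$.

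Given any sign sequence $(\ep_1, \ldots, \ep_{km}) \in \{-1,1\}^{km}$, split it into $k$ consecutive blocks $\ep\^1, \ldots, \ep\^k \in \{-1,1\}^m$. By the shattering property of $\bx\^b$, there is an index $i_b \in B_b$ such that $x_{i_b}$ realizes $\ep\^b$ along the corresponding path of $\bx\^b$. Setting $g := x_{i_1} \vee \cdots \vee x_{i_k}$, which lies in $G(\MH, \ldots, \MH)$, at any node of $\bx$ sitting inside the $b$-th block's subtree, every literal $x_{i_{b'}}$ with $b' \neq b$ evaluates to $0$ by the support property of $\bx\^b$, so $g$ agrees with $x_{i_b}$ at that node, and hence with $\ep_t$. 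Thus $\bx$ is shattered, giving $\Ldim(G(\MH, \ldots, \MH)) \geq km = k\lfloor d - \log k \rfloor$.

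The only delicate point is non-interference between blocks under the OR, which is precisely why I insist on the per-block support condition; everything else is routine binary-search shattering composed $k$-fold, so I do not expect a serious obstacle.
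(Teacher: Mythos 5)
Your construction is correct and follows the same high-level plan as the paper: take $\MH$ to be single literals so that $G(\MH,\ldots,\MH)$ contains all $k$-literal monotone disjunctions, bound $\Ldim(\MH)$ by $\log_2|\MH|$, and show the disjunction class has Littlestone dimension at least $k\lfloor d-\log k\rfloor$. The one genuine difference is in that last step: the paper simply cites Littlestone's original Theorem 8, which gives exactly the bound $\Ldim\ge k\lfloor\log(D/k)\rfloor$ for $k$-literal monotone disjunctions over $D$ variables, whereas you re-derive it from scratch via the block-concatenated binary-search tree with the per-block support trick to make the ORs non-interfering. Your version is thus self-contained where the paper's is an appeal to the literature, at the cost of a bit more bookkeeping; I verified your concatenation indexing $\bx_t(\ep_{1:t-1})=\bx^{(b)}_{t-(b-1)m}(\ep_{(b-1)m+1:t-1})$ and the support argument, and they hold. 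Two cosmetic remarks: your domain is $N=k\cdot 2^m$ rather than the paper's $2^d$ (immaterial), and when you write that $g$ ``agrees with $\ep_t$'' you of course mean it agrees with $(\ep_t+1)/2$.
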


\begin{proof}
Let $D = 2^d$, $X = \{0,1\}^D$, and $\MH := \{ x \mapsto x_j : j \in [D]\}$, where $x \in \{0,1\}^D$. We have $\Ldim(\MH) \leq \lceil \log |\MH| \rceil = \lceil \log D \rceil = d$ as desired. Next, observe that the class $\cH' := G(\MH, \dots, \MH)$ is exactly the class of $k$-literal monotone disjunction, i.e., $\{ x \mapsto x_{i_1} \vee \cdots \vee x_{i_k} : i _1, \dots, i_k \in [D] \}$. Littlestone~\cite[Theorem 8]{littlestone_learning_1988} showed that $\Ldim(\cH') \geq k \lfloor \log(D/k) \rfloor = k \lfloor d - \log k \rfloor$, which completes the proof.
\end{proof}

\subsection{Tree covering numbers vs.~thicket shatter function}
\label{sec:thicket-shattering}
In this section we discuss an alternative to the 0-covering number for which a Sauer--Shelah lemma holds as well; we also establish that this  is strictly weaker than Lemma \ref{lem:sauer}, the Sauer--Shelah lemma for the 0-covering number for trees. This alternative to the  0-covering number is known as the thicket shatter function \cite{bhaskar_thicket_2017}:
\begin{defn}[Thicket shatter function]
  For an $X$-valued tree $\bx$ and function class $\MF$, let $\rho(\MF, \bx)$ denote the number of sequences $\ep = (\ep_1, \ldots, \ep_n) \in \{-1,1\}^n$ so that there is some $f \in \MF$ with
  \begin{equation}
    \label{eq:admissible-eps}
f(\bx_t(\ep_{1:t-1})) = \frac{\ep_t + 1}{2} \ \ \forall t \in \{1, 2, \ldots, n\}.
  \end{equation}
  In the event that (\ref{eq:admissible-eps}) holds, we will say that the sequence $\ep$ {\normalfont admits a solution in $\MF$ for the tree $\bx$}.
\end{defn}
Analogously to Lemma \ref{lem:sauer}, Bhaskar \cite[Theorem 4.1]{bhaskar_thicket_2017} showed that if the Littlestone dimension of $\MF$ is at most $d$, then for any tree $\bx$ of depth $n$, we have $\rho(\MF, \bx) \leq \sum_{i=0}^d {n \choose i}$. Lemma \ref{lem:lower-bound-shattering} shows that this statement is weaker than (i.e., follows directly from) Lemma \ref{lem:sauer}.
\begin{lemma}
  \label{lem:lower-bound-shattering}
For an $X$-valued tree $\bx$ and a function class $\MF \subset \{0,1\}^X$, it holds that $\rho(\MF, \bx) \leq \MN_0(\MF, \bx)$. 
\end{lemma}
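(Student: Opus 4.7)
The plan is to construct an injection from the set of admissible sequences (those counted by $\rho(\MF, \bx)$) into any given 0-cover $\MV$ of $\MF$ on $\bx$. Once such an injection exists, taking $\MV$ to be a minimum 0-cover yields $\rho(\MF, \bx) \leq |\MV| = \MN_0(\MF, \bx)$.

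First, let $\MV$ be any 0-cover for $\MF$ on $\bx$. Given an admissible sequence $\ep = (\ep_1, \ldots, \ep_n) \in \{-1,1\}^n$, fix a witness $f_\ep \in \MF$ satisfying $f_\ep(\bx_t(\ep_{1:t-1})) = (\ep_t + 1)/2$ for all $t \in [n]$. By the defining property of a 0-cover (applied with the sequence $\ep_{1:n-1}$), there exists $\bv_\ep \in \MV$ such that $\bv_{\ep,t}(\ep_{1:t-1}) = f_\ep(\bx_t(\ep_{1:t-1})) = (\ep_t+1)/2$ for every $t \in [n]$. Define the map $\Phi : \ep \mapsto \bv_\ep$ from admissible sequences to $\MV$.

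Next, I verify injectivity of $\Phi$. Suppose $\ep \neq \ep'$ are two admissible sequences with $\Phi(\ep) = \Phi(\ep') =: \bv$. Let $t^\star$ be the smallest index at which $\ep$ and $\ep'$ differ, so $\ep_{1:t^\star-1} = \ep'_{1:t^\star-1}$. Then evaluating $\bv$ at level $t^\star$ along this common prefix gives a single value $\bv_{t^\star}(\ep_{1:t^\star-1})$, but by construction this value must equal both $(\ep_{t^\star}+1)/2$ and $(\ep'_{t^\star}+1)/2$, which contradicts $\ep_{t^\star} \neq \ep'_{t^\star}$. Hence $\Phi$ is injective, giving $\rho(\MF, \bx) \leq |\MV|$, and minimizing over $\MV$ yields the claim.

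I do not expect any real obstacle here; the argument is essentially a pigeonhole observation that a single tree $\bv \in \{0,1\}$-valued of depth $n$ can be consistent (along paths, with values equal to $(\ep_t+1)/2$) with at most one admissible branch $\ep$, because the branch is forced by the tree. Deducing Lemma \ref{lem:lower-bound-shattering-cor} is then immediate: if $\bx$ is shattered, every $\ep \in \{-1,1\}^n$ is admissible, so $\rho(\MF, \bx) = 2^n$, and the inequality just proved forces $\MN_0(\MF, \bx) \geq 2^n$.
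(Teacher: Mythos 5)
Your proof is correct and is essentially the same as the paper's: both arguments map each admissible $\ep$ to a cover element $\bv_\ep$ forced to match $(\ep_t+1)/2$ along the path $\ep$, and both establish injectivity by looking at the first index where two admissible sequences disagree and observing that the single value of $\bv$ along the shared prefix cannot agree with both. The only cosmetic difference is that the paper phrases the injectivity check via a lexicographic ordering of $\{-1,1\}^n$, while you state it directly as a pigeonhole/injection argument.
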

\begin{proof}
  Let us give $\{-1,1\}^n$ the lexicographic ordering with $(-1, \ldots, -1)$ first, $(-1, \ldots, -1, 1)$ second, $(-1, \ldots, 1, -1)$ third, and so on. Let $\MV$ be a 0-cover for $\MF$ on the tree $\bx$.

  Fix any sequence $\ep = (\ep_1, \ldots, \ep_n) \in \{-1,1\}^n$ that admits a solution in $\MF$ (i.e., (\ref{eq:admissible-eps}) holds). There must be some $\bv\^\ep \in \MV$ so that for $t \in [n]$, we have $\bv\^\ep_t(\ep_{1:t-1}) = \frac{\ep_t + 1}{2}$. Fix any $\ep' < \ep$ (using the lexicographic ordering) which also admits a solution in $\MF$, and choose $t_0$ as small as possible so that $\ep'_{t_0} < \ep_{t_0}$. For all $t < t_0$, it follows that $\ep'_t = \ep_t$. Then we have
  $$
\bv_{t_0}\^{\ep'} (\ep_{1:t_0-1}) = 0 \neq 1 = \bv_{t_0}\^{\ep}(\ep_{1:t_0-1}).
$$
Hence $\bv\^\ep \neq \bv\^{\ep'}$, and hence, for all $\ep \in \{-1,1\}^n$ admitting a solution in $\MF$, the $\bv\^\ep$ are distinct.
\end{proof}
As an immediate corollary of Lemma \ref{lem:lower-bound-shattering}, we obtain  Lemma \ref{lem:lower-bound-shattering-cor}, since a tree $\bx$ that is shattered by $\MF$ satisfies $\rho(\MF, \bx) = 2^n$.

Finally, we show in Proposition \ref{prop:thicket-cover} that $\rho(\MF, \bx)$ and $\MN_0(\MF, \bx)$ may be very far apart. Though this fact is not used to prove any other results in this note, it establishes that Lemma \ref{lem:sauer} is in fact {\it strictly} stronger than \cite[Theorem 4.1]{bhaskar_thicket_2017}. This additional strength seems to be crucial in allowing us to establish Proposition \ref{prop:ldim-ub} using Lemma \ref{lem:sauer} (but not using $\rho(\MF, \bx) \leq \sum_{i=0}^d {n \choose i}$ alone). 
\begin{proposition}
  \label{prop:thicket-cover}
For any $n \in \BN$, there is a function class $\MF$ and a tree $\bx$ of depth $n$ so that $\rho(\MF, \bx) = 1$ yet $\MN_0(\MF, \bx) \geq 2^{n-1}$.
\end{proposition}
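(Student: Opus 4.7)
The plan is to exhibit a class $\MF$ and a tree $\bx$ where only a single root-to-leaf branch admits a solution in $\MF$, yet the off-branch values of functions in $\MF$ are completely unconstrained, so that any $0$-cover is forced to enumerate every possible labeling along a chosen alternative path.

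First I would take $X$ to have at least $2^n-1$ elements and define $\bx$ so that its $2^n-1$ internal nodes are labeled by pairwise distinct elements of $X$. Writing $P := \{\bx_t(1,\ldots,1) : t \in [n]\}$ for the ``all-$1$'s'' path, I would set
\[
\MF := \{ f \in \{0,1\}^X : f(x) = 1 \text{ for every } x \in P\}.
\]
By distinctness of tree nodes, every $f \in \MF$ can freely take any value in $\{0,1\}$ on every node of $\bx$ outside $P$.

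To verify $\rho(\MF,\bx)=1$, observe that the constant function $f\equiv 1 \in \MF$ shows that $(1,\ldots,1)$ admits a solution; for any other $\ep \in \{-1,1\}^n$, if $t_0$ is the first index with $\ep_{t_0}=-1$, then $\ep_{1:t_0-1}=(1,\ldots,1)$, so $\bx_{t_0}(\ep_{1:t_0-1})\in P$, forcing every $f\in\MF$ to take value $1$ there, which mismatches $(\ep_{t_0}+1)/2 = 0$. Hence only the all-$1$'s sequence is admissible.

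For the lower bound on $\MN_0(\MF,\bx)$, I would focus on the single path indexed by $\ep^{\star} := (-1,1,\ldots,1) \in \{-1,1\}^{n-1}$. The root $\bx_1()$ lies in $P$, but each subsequent node $\bx_t(\ep^{\star}_{1:t-1})$ for $t=2,\ldots,n$ lies outside $P$ and the $n-1$ such nodes are pairwise distinct. Consequently, as $f$ varies over $\MF$, the tuple $(f(\bx_t(\ep^{\star}_{1:t-1})))_{t=1}^n$ realizes all $2^{n-1}$ strings of the form $(1,*,\ldots,*)$. Since each $\bv \in \MV$ contributes only a single tuple $(\bv_t(\ep^{\star}_{1:t-1}))_{t=1}^n$ along this path, any $0$-cover $\MV$ must satisfy $|\MV|\ge 2^{n-1}$. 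The only real subtlety is ensuring that $\bx$ is chosen with all $2^n-1$ internal nodes labeled by distinct elements, so that the off-$P$ labels on the $\ep^{\star}$-path truly vary independently; aside from that, everything reduces to careful bookkeeping with the two indexing conventions (length-$n$ sequences for $\rho$ versus length-$(n{-}1)$ sequences for the $0$-cover).
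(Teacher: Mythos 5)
Your proof is correct and follows the same overall strategy as the paper's: constrain $\MF$ on a single root-to-leaf path so that only one sequence $\ep \in \{-1,1\}^n$ admits a solution (giving $\rho(\MF,\bx)=1$), and exploit the complete freedom of $\MF$ off that path to force a large $0$-cover. The paper constrains $f \equiv 0$ on the all-$(-1)$ path while you constrain $f \equiv 1$ on the all-$(+1)$ path, which is just a symmetric relabeling. The one genuine (minor) divergence is in the cover lower bound step: the paper ranges over all $2^{n-1}$ sequences of the form $(1,\ep_2,\ldots,\ep_n)$ and re-invokes the lexicographic first-divergence argument of Lemma~\ref{lem:lower-bound-shattering}, whereas you fix the single branch $\ep^\star=(-1,1,\ldots,1)$ and directly count the $2^{n-1}$ distinct restrictions of functions in $\MF$ along it, observing that each must be matched by a distinct cover element. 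Your single-path pigeonhole argument is slightly more self-contained since it avoids the appeal to Lemma~\ref{lem:lower-bound-shattering}, but the two arguments are otherwise essentially identical in content and length.
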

\begin{proof}
  Let us label all $2^{n}-1$ nodes of the tree $\bx$ with different elements of $X$; in particular, for each $1 \leq t \leq n$, denote the $2^{t-1}$ nodes of layer $t$ by $x_{t,1}, \ldots, x_{t,2^{t-1}}$, with all $x_{t,j}$ distinct. For simplicity we may assume that $X = \{ x_{t,j} : t \in [n], 1 \le j \leq 2^{t-1}\}$.  Now, choose $\MF$ to be the set of all functions $f : X \ra \{0,1\}$ so that $f(x_{1,1}) = f(x_{2,1}) = \cdots= f(x_{n,1}) = 0$. Then $\rho(\MF, \bx) = 1$ since the only $\ep$ admitting a solution in $\MF$ for the tree $\bx$ (i.e., satisfying (\ref{eq:admissible-eps})) is $\ep = (-1, \ldots, -1)$.

  On the other hand, letting $\ep_1 := 1$, then for any $\ep_2, \ldots, \ep_n \in \{-1,1\}$, there is some $f \in \MF$ so that
  $$
f(x_{1,1}) = f(\bx_1) = 0, \quad f(\bx_t(\ep_{1:t-1})) = \frac{1 + \ep_t}{2} \quad \forall t \geq 2.
$$
Now the argument of Lemma \ref{lem:lower-bound-shattering} establishes that there must be a unique element of a 0-cover for each sequence of the form $(1, \ep_2, \ldots, \ep_n)$. Thus $\MN_0(\MF, \bx) \geq 2^{n-1}$. 
\end{proof}

\section{Closure bounds for threshold dimension}
For positive integers $i,j$, write $\One[i \geq j]$ to be 1 if $i \geq j$ and 0 otherwise. Similarly write $\One[i = j]$ to be 1 if $i = j$ and 0 otherwise. The threshold dimension of a hypothesis class is defined as follows:
\begin{defn}[Threshold dimension]
\label{def:tdim}
For a binary hypothesis class $\MF \subset \{0,1\}^X$, the {\normalfont threshold dimension} of $\MF$, denoted $\Tdim(\MF)$, is the largest positive integer $d$ so that there are $x_1, \ldots, x_d \in X$ and $f_1, \ldots, f_d \in \MF$ such that $f_i(x_j) = \One[i \geq j]$ for all $i,j \in [d]$. In such a case, we say that $x_1, \ldots, x_d$ are {\normalfont threshold shattered by $\MF$ via} $f_1, \ldots, f_d$. 
\end{defn}

Proposition \ref{prop:tdim-ub} establishes an upper bound for $\Tdim(G(\MH_1, \ldots, \MH_k))$ in terms of $\max_{1 \leq j \leq k} \Tdim(\MH_j)$. 
It improves upon an upper bound of \cite{alon_closure_2020} that grows {\it doubly} exponentially in $k$. The proof technique is similar to that of \cite{alon_closure_2020}, except that in the application of Ramsey's theorem a coloring with only $2k$, as opposed to $2^{2k}$, colors is used.
\begin{proposition}
\label{prop:tdim-ub}
Let $G: \{0, 1\}^k \to \{0, 1\}$ be a Boolean function. Let $\cH_1, \dots, \cH_k \subseteq \{0, 1\}^X$ be binary hypothesis classes, and let $d \in \N$ be such that $\Tdim(\cH_i) \leq d$ for all $i \in [k]$. Then
\begin{align*}
\Tdim(G(\cH_1, \dots, \cH_k)) \leq 2^{O(k d \log k)}.
\end{align*}
\end{proposition}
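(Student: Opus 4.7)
The plan is to bound $N := \Tdim(G(\cH_1,\ldots,\cH_k))$ by contradiction via a Ramsey-theoretic argument, mirroring the structure of \cite{alon_closure_2020} but with a drastically coarser coloring ($O(k)$ colors in place of their $2^{O(k)}$). Suppose $N$ points $x_1,\ldots,x_N \in X$ are threshold shattered by $g_1,\ldots,g_N \in G(\cH_1,\ldots,\cH_k)$ with $g_i(x_j) = \One[i \geq j]$, and for each $i$ fix a representation $g_i = G(h_{i,1},\ldots,h_{i,k})$ with $h_{i,\ell} \in \cH_\ell$. The goal is to bound $N$ by $(2k)^{O(kd)} = 2^{O(kd\log k)}$.

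For each pair $i<j$, the vectors $a_{ij} := (h_{i,\ell}(x_j))_{\ell=1}^k$ and $b_{ij} := (h_{j,\ell}(x_i))_{\ell=1}^k$ satisfy $G(a_{ij}) = g_i(x_j) = 0$ and $G(b_{ij}) = g_j(x_i) = 1$. Walk from $a_{ij}$ to $b_{ij}$ by toggling coordinates $1,2,\ldots,k$ in order (coordinates that already agree contribute no change); since $G$ switches value, there is a first coordinate $\ell_{ij} \in [k]$ at which $G$ flips, and necessarily $h_{i,\ell_{ij}}(x_j) \neq h_{j,\ell_{ij}}(x_i)$. I would color the pair $(i,j)$ by $(\ell_{ij},\, h_{i,\ell_{ij}}(x_j)) \in [k] \times \{0,1\}$, giving a total of $2k$ colors. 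By the multicolor Ramsey bound $R_{2k}(m) \leq (2k)^{O(km)}$, once $N$ exceeds this threshold one finds a monochromatic set $S = \{i_1 < \cdots < i_m\}$ with common color $(\ell^\ast, b^\ast)$, so that $h_{i_p,\ell^\ast}(x_{i_q}) = b^\ast$ and $h_{i_q,\ell^\ast}(x_{i_p}) = 1 - b^\ast$ for every $p < q$ in $[m]$.

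The next step is to extract from $S$ a large threshold configuration in the single class $\cH_{\ell^\ast}$. The pair-constant information determines $h_{i_p,\ell^\ast}(x_{i_q})$ for all $p \neq q$; the only values not yet controlled are the diagonal bits $h_{i_p,\ell^\ast}(x_{i_p})$. A cheap secondary pigeonhole on vertices partitions $S$ by this single bit and retains $S' \subseteq S$ of size $\geq m/2$ on which the diagonal is constant, say equal to $c^\ast \in \{0,1\}$. Choosing functions $f_p := h_{i_p,\ell^\ast}$ and points $y_q := x_{i_q}$ (composed with an order-reversal if $b^\ast = 1$ to convert the reversed threshold pattern, and with a single-index shift $p \mapsto p+1$ if $c^\ast$ happens to disagree with the $\One[p \geq q]$ target on the diagonal), one then verifies mechanically that $f_p(y_q) = \One[p \geq q]$ on the resulting indices. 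This gives $\Tdim(\cH_{\ell^\ast}) \geq \Omega(m)$. Combined with the hypothesis $\Tdim(\cH_{\ell^\ast}) \leq d$, this forces $m \leq O(d)$ and hence $N \leq (2k)^{O(kd)} = 2^{O(kd\log k)}$.

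I expect the main obstacle to be the design of the pair-coloring: it must be coarse enough to produce a large monochromatic subset (hence only $O(k)$ colors, rather than the $2^{O(k)}$ of \cite{alon_closure_2020}) yet informative enough that monochromaticity mechanically pins down a single coordinate $\ell^\ast$ together with a consistent threshold pattern in $\cH_{\ell^\ast}$. The ``first coordinate at which $G$ flips along the $k$-step bit-flip path from $a_{ij}$ to $b_{ij}$'' is the key device — it extracts a canonical index per pair using only $\lceil \log k \rceil + 1$ bits, rather than the full $2k$ bits of the pair $(a_{ij}, b_{ij})$. The remaining bookkeeping (diagonal bit, the $b^\ast \in \{0,1\}$ case split, and the small-$k$ base case) contributes only constant factors that are absorbed into the $O(\cdot)$ in the exponent.
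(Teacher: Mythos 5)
Your proposal is correct and takes essentially the same approach as the paper: a Ramsey argument with $2k$ colors, where each edge $\{i,j\}$ is colored by a canonical coordinate $\ell$ at which the vectors $(h_{i,\cdot}(x_j))$ and $(h_{j,\cdot}(x_i))$ disagree together with the bit $h_{i,\ell}(x_j)$, followed by extracting a threshold configuration in a single $\cH_{\ell^\ast}$ from a monochromatic clique. The only cosmetic differences are in how the discriminating coordinate is chosen (you use the first $G$-flip along a bit-flip path, the paper just uses the smallest index of disagreement) and in how the diagonal entries are handled (you pigeonhole on $h_{i_p,\ell^\ast}(x_{i_p})$ and shift, whereas the paper sidesteps the diagonal entirely by using alternating indices $i_2,i_4,\dots$ for points and $i_3,i_5,\dots$ for functions).
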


\begin{proof}
Let $N$ be the smallest positive integer such that, for every coloring of the edges of the complete graph $K_N$ in $c = 2k$ colors, there exists a monochromatic clique of size $r = 2d + 1$. It is well known in Ramsey theory (e.g.,~\cite{greenwood_gleason_1955}) that $N \leq c^{r c} = 2^{O(kd \log k)}$. We will show that $\Tdim(G(\cH_1, \dots, \cH_k)) < N$.

Suppose contrapositively that $\Tdim(G(\cH_1, \dots, \cH_k)) \geq N$. By definition of threshold dimension, there exists $x_1, \dots, x_N \in X$ and $h_{i \ell} \in \cH_\ell$ for $i \in [N], \ell \in [k]$ such that
\begin{align*}
G(h_{i1}(x_j), \dots, h_{ik}(x_j)) = \One[i \geq j] & & \forall i, j \in [N].
\end{align*}
Consider the complete graph $K_N$ and a coloring with $2k$ colors defined as follows: for each $1 \leq p < q \leq N$, let $\ell \in [k]$ be the smallest index such that $h_{p\ell}(x_q) \ne h_{q\ell}(x_p)$; such $\ell$ must exist because $G(h_{p1}(x_q), \dots, h_{pk}(x_q)) = 0 \neq 1 = G(h_{q1}(x_p), \dots, h_{qk}(x_p))$. Then, let the color of the edge $\{p, q\}$ be $(\ell, h_{p\ell}(x_q))$.

By our choice of $N$, the graph must contain a monochromatic clique with vertices $i_1 < \dots < i_{2d + 1}$; let the color of its edges be $(t, y)$ where $t \in [k]$ and $y \in \{0, 1\}$. From how each edge's color is defined, the following holds for all distinct $u, v \in [2d + 1]$:
\begin{align*}
h_{i_u t}(x_{i_v}) =
\begin{cases}
y &\text{ if } u < v \\
1 - y &\text{ if } u > v.
\end{cases}
\end{align*}
Thus, if $y = 0$, then $x_{i_2}, x_{i_4}, \dots, x_{i_{2d}}$ is threshold shattered by $\cH_t$ (via the hypotheses $h_{i_3 t}, h_{i_5 t}, \dots, h_{i_{2d + 1} t}$). Otherwise, if $y = 1$, then $x_{i_{2d}}, x_{i_{2d - 2}}, \dots, x_{i_2}$ is threshold shattered by $\cH_t$ (via $h_{i_{2d - 1} t}, h_{i_{2d - 1} t}, \dots, h_{i_1 t}$). 
In both cases, we have $\Tdim(\cH_t) \geq d$, which concludes our proof.
\end{proof}

Next we establish a lower bound showing that Proposition \ref{prop:tdim-ub} is nearly tight. We need the following lemma from~\cite{alon_closure_2020}, which shows exponential dependence in $d$ (but not necessarily in $k$) is necessary.
\begin{lemma}[\cite{alon_closure_2020}, Theorem 2.2]
\label{lem:lb-exp-d}
For every $d \geq 6$ there is a class $\MC$ consisting of classifiers $f : \{0, 1, \ldots, 2^{\lfloor d/5 \rfloor}-1\} \ra \{0,1\}$ so that $\Tdim(\MC) \leq d$ yet 
$$
\Tdim(\{ f_1 \vee f_2 : f_1, f_2 \in \MC \}) = 2^{\lfloor d/5 \rfloor}.
$$
In fact, the class $\{ f_1 \vee f_2 : f_1, f_2 \in \MC \}$ realizes the thresholds $x \mapsto \One[b \geq x]$, for each $0 \leq b \leq 2^{\lfloor d/5\rfloor}-1$. 
\end{lemma}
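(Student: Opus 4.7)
The plan is to construct the class $\MC$ explicitly. Set $n = \lfloor d/5 \rfloor$ and $X = \{0, 1, \ldots, 2^n - 1\}$. The starting point is that every threshold $\One[b \geq x]$ admits a natural 2-piece decomposition along the dyadic hierarchy: writing $k(b) = \lfloor \log_2(b+1)\rfloor$, we have $\One[b \geq x] = \One[x < 2^{k(b)}] \vee \One[2^{k(b)} \leq x \leq b]$, where the first is a ``dyadic prefix'' (of which there are only $n+1$, forming a chain of threshold dimension at most $n$) and the second is a threshold within a dyadic block of size $2^{k(b)}$. Naively including all within-block thresholds in $\MC$ would force $\Tdim(\MC) \geq 2^{n-1}$, so the main creative step is to structure the within-block pieces recursively.

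Concretely, I would build $\MC$ recursively. Letting $A_1 = [0, 2^{n-1}-1]$ and $A_2 = [2^{n-1}, 2^n-1]$, define
\[
\MC_n := \MC_{n-1}^{A_1} \;\cup\; \MC_{n-1}^{A_2} \;\cup\; \bigl\{\One[x \in A_1] \vee g : g \in \MC_{n-1}^{A_2}\bigr\},
\]
where $\MC_{n-1}^{A_i}$ is the recursively built class on the half $A_i$, extended by $0$ outside, with base case $\MC_0 = \{0, 1\}$. The third ``merged'' subclass pre-combines the dyadic prefix indicator with a base function on $A_2$, so that every threshold on $X$ remains an OR of exactly two elements of $\MC_n$: the threshold $T_b$ for $b \in A_1$ is realized by the inductive hypothesis on $\MC_{n-1}^{A_1}$; for $b \in A_2$, it equals $(\One[A_1] \vee g_1) \vee g_2$ where $g_1 \vee g_2$ is the 2-OR representation of $T_b|_{A_2}$ from the inductive hypothesis on $\MC_{n-1}^{A_2}$. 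A straightforward induction confirms $\MC_n \vee \MC_n \supseteq \{\One[b \geq x] : b \in X\}$.

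The main obstacle is bounding $\Tdim(\MC_n) = O(n)$. I would proceed by a Ramsey-style case analysis on the ``type'' of each hypothesis in a threshold-shattered sequence of size $D$, exploiting two key structural facts: (i) the first two subclasses have disjoint supports $A_1, A_2$, so a shattering using only those types collapses to a single half by a first-index argument, reducing to a shattering in one of the recursive base classes; (ii) the merged subclass is identically $1$ on $A_1$, which forces its indices in any shattering to come \emph{after} all $A_1$-points and restricts its action on $A_2$ to a threshold-shattering inside $\MC_{n-1}^{A_2}$. A naive pigeonhole across the three types yields only $T(n) \leq 2T(n-1) + O(1)$, i.e., $T(n) = O(2^n)$, which is far too weak. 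The hard part is to tighten this to a linear recurrence $T(n) \leq T(n-1) + O(1)$ by showing that the type-1 and type-3 indices are forced to share a common chain structure via their interaction with the $A_1$-points, so their contributions cannot both be near-maximal. The constant $5$ in $\lfloor d/5 \rfloor$ reflects the bookkeeping needed for this refined analysis.
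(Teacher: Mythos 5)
The paper does not prove this lemma; it is cited directly from Alon et al.~(COLT 2020, Theorem 2.2), so there is no ``paper's own proof'' to compare against. That said, your reconstruction contains a genuine error, not merely a gap: for your recursive class $\MC_n$, the threshold dimension is exponential in $n$, not linear, so the bound $\Tdim(\MC_n) = O(n)$ is false rather than simply unproven.

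To see this, suppose $\MC_{n-1}$ threshold-shatters $y_1,\dots,y_m$ via $g_1,\dots,g_m$. Copy this shattering to $A_1$, yielding $x_1,\dots,x_m\in A_1$ shattered by $f_1,\dots,f_m\in\MC_{n-1}^{A_1}$, and copy it to $A_2$, yielding $x_1',\dots,x_m'\in A_2$ shattered by $g_1',\dots,g_m'\in\MC_{n-1}^{A_2}$; set $h_i:=\One[x\in A_1]\vee g_i'$, a member of your merged subclass. Then the length-$2m$ sequence $x_1,\dots,x_m,x_1',\dots,x_m'$ is threshold-shattered in $\MC_n$ by $f_1,\dots,f_m,h_1,\dots,h_m$: indeed, $f_i(x_j)=\One[i\ge j]$ for $i,j\le m$; $f_i(x_j')=0$ since $f_i$ vanishes on $A_2$ and $i\le m<m+j$; $h_i(x_j)=1$ since $h_i\equiv 1$ on $A_1$ and $m+i>m\ge j$; and $h_i(x_j')=g_i'(x_j')=\One[i\ge j]=\One[m+i\ge m+j]$. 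Hence $\Tdim(\MC_n)\ge 2\,\Tdim(\MC_{n-1})$, and since $\Tdim(\MC_0)\ge 1$ this gives $\Tdim(\MC_n)\ge 2^n=|X|$; that is, $\MC_n$ already threshold-shatters the entire domain, and no Ramsey-style refinement can salvage a linear recurrence because $T(n)\le 2T(n-1)+O(1)$ is essentially tight for your class. The culprit is exactly the structural feature you flagged in your item (ii): because the merged subclass is identically $1$ on $A_1$, its elements may be freely appended \emph{after} any shattering on $A_1$ witnessed by $\MC_{n-1}^{A_1}$, so the two subproblems concatenate and the threshold dimension doubles at each level. A correct construction must prevent this free concatenation; the construction actually used in the cited reference is of a different nature.
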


Proposition \ref{prop:tdim-lb} shows that Proposition \ref{prop:tdim-ub} is tight, up to possibly the factor of $\log k$ in the exponent:
\begin{proposition}
\label{prop:tdim-lb}
For any positive integers $d \geq 6$ and $k$, there is a domain $\MX$ and classes $\MH_1, \ldots, \MH_{3k} : \MX \ra \{0,1\}$ and a function $G : \{0,1\}^{3k} \ra \{0,1\}$ so that:
\begin{enumerate}
\item $\max\{\Tdim(\MH_1), \ldots ,\Tdim(\MH_{3k})\} \leq d$.
\item $\Tdim(G(\MH_1, \ldots, \MH_{3k})) = 2^{k \lfloor d/5 \rfloor}$.
\end{enumerate}
\end{proposition}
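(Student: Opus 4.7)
The plan is to encode the desired $2^{k\lfloor d/5\rfloor}$ thresholds as all lexicographic comparisons on the product domain $\MX = \{0,1,\ldots,M-1\}^k$, where $M := 2^{\lfloor d/5\rfloor}$. Since $|\MX| = M^k = 2^{k\lfloor d/5\rfloor}$, the upper bound $\Tdim \leq 2^{k\lfloor d/5\rfloor}$ is automatic, and the task reduces to showing that $G(\MH_1,\ldots,\MH_{3k})$ realizes $x \mapsto \One[b \geq_{\mathrm{lex}} x]$ for every $b \in \MX$: enumerating $\MX$ in lex order as $y_1 <_{\mathrm{lex}} \cdots <_{\mathrm{lex}} y_N$ with $N = M^k$ and setting $g_i := \One[y_i \geq_{\mathrm{lex}} \cdot]$ yields $g_i(y_j) = \One[i \geq j]$, the required threshold shattering.

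The key identity is
\[
\One[b \geq_{\mathrm{lex}} x] \;=\; \bigvee_{i=1}^{k}\Big[\bigwedge_{j<i}\One[b_j = x_j] \,\wedge\, \One[b_i > x_i]\Big] \;\vee\; \bigwedge_{j=1}^{k}\One[b_j = x_j],
\]
which uses only two single-coordinate primitives per coordinate: $\One[b_i > x_i]$ and $\One[b_i = x_i]$. Accordingly, I would split the $3k$ classes into $k$ blocks of three and, for each $i \in [k]$, set $\MH_{i,1} = \MH_{i,2} = \{x \mapsto f(x_i) : f \in \MC\}$ (two lifted copies of the Lemma~\ref{lem:lb-exp-d} class, acting on coordinate $i$) and $\MH_{i,3} = \{x \mapsto \One[b = x_i] : 0 \leq b \leq M-1\}$ (single-point indicators on coordinate $i$). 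The aggregation $G : \{0,1\}^{3k} \to \{0,1\}$ is then the Boolean circuit that evaluates the displayed formula with $z_{i,1} \vee z_{i,2}$ in place of $\One[b_i > x_i]$ and $z_{i,3}$ in place of $\One[b_i = x_i]$.

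I would then check: (a) $\Tdim(\MH_{i,\ell}) \leq d$ --- for $\MH_{i,3}$, single-point indicators have $\Tdim = 1$ by a short direct argument, and for $\MH_{i,1}, \MH_{i,2}$ any threshold shattering forces distinct values on coordinate $i$ and therefore yields a threshold shattering of the same size inside $\MC$; and (b) for each target $b \in \MX$, Lemma~\ref{lem:lb-exp-d} supplies $f_1, f_2 \in \MC$ with $f_1 \vee f_2 = \One[(b_i - 1) \geq \cdot] = \One[b_i > \cdot]$ on $\{0,\ldots,M-1\}$ (valid when $b_i \geq 1$), so defining $h_{i,1}(x) := f_1(x_i)$, $h_{i,2}(x) := f_2(x_i)$, and $h_{i,3}(x) := \One[b_i = x_i]$ makes $G(h_{1,1}(x),\ldots,h_{k,3}(x))$ equal $\One[b \geq_{\mathrm{lex}} x]$ by construction.

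The one wrinkle I expect to be the main (though still minor) obstacle is the corner case $b_i = 0$, for which $\One[b_i > x_i]$ is identically $0$ and so must itself belong to $\MC \vee \MC$. I would resolve this by augmenting $\MC$ with the constant-$0$ function, which cannot increase $\Tdim(\MC)$ since the constant $0$ cannot play any $f_i$ in a threshold shattering (each such $f_i$ must satisfy $f_i(x_i) = 1$).
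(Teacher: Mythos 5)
Your construction is essentially the paper's: both lift the Lemma~\ref{lem:lb-exp-d} class $\MC$ to act coordinate-wise on the product domain $\{0,\ldots,D-1\}^k$ (two copies per coordinate), add a class of single-point indicator functions $\One[x_i = b]$ per coordinate, and design $G$ to evaluate lexicographic comparison of base-$D$ digit strings. The one deviation is that you encode lex comparison with strict inequalities $\One[b_i > x_i]$, which forces you to augment $\MC$ by the constant-$0$ function to cover the case $b_i = 0$, whereas the paper uses the non-strict thresholds $\One[b_i \geq x_i]$ supplied directly by Lemma~\ref{lem:lb-exp-d} and folds the strictness into the indicators (testing where the first disagreement $z_j = 0$ occurs), thereby avoiding the corner case entirely---your patch is correct but unnecessary.
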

\begin{proof}
Fix $d \geq 6, k$ and write $D := 2^{\lfloor d/5 \rfloor}$. Consider the domain $\MX := \{0, 1, \ldots, D^k-1\}$. For $x \in \MX$, we will write its base-$D$ representation as $x = x_1 x_2 \cdots x_k$, so that $x_1, \ldots, x_k \in \{0, 1, \ldots D-1 \}$. Let $\MC$, consisting of functions $f : \{0,1, \ldots, D-1\} \ra \{0,1\}$, be the class from Lemma \ref{lem:lb-exp-d}. We now define $k$ classes $\MH_1, \ldots, \MH_k$, as follows: for $1 \leq j \leq k$, let $\MH_j := \{ h_{j,f} : f \in \MC \}$, where for $f \in \MC$, 
\begin{align}
h_{j,f}(x_1 \cdots x_k) &= f(x_j).
\end{align}
Also define classes $\MG_1, \ldots, \MG_k$ as follows: for $j \in [k]$, let $\MG_j := \{ g_{j,0}, \ldots, g_{j,D-1}\}$, where for $b \in \{0, 1, \dots, D - 1\}$,
$$
g_{j, b}(x_1 \cdots x_k) = \One[x_j = b].
$$
Now define $\tilde G : \{0,1\}^{2k} \ra \{0,1\}$ as follows:
$ \tilde G(y_1, \ldots, y_k, z_1, \ldots, z_k) = 1$ if and only if either (a) $y_1 = \cdots = y_k = 1$ or (b) in the case that there is a smallest index $j$ with $z_j = 0$, it holds that $y_1 = \cdots = y_j = 1$. Finally define $G : \{0,1\}^{3k} \ra \{0,1\}$, as follows:
$$
G(y_1, \ldots, y_k, y_1', \ldots, y_k', z_1, \ldots, z_k) = \tilde G(y_1 \vee y_1', \ldots, y_k \vee y_k', z_1, \ldots, z_k).
$$

On one hand, it is straightforward to see that for each $j \in [k]$, $\Tdim(\MH_j) \leq d$, since $\Tdim(\MC) \leq d$ from Lemma \ref{lem:lb-exp-d}. It is also straightforward that $\Tdim(\MG_j) \leq 1$ for each $j$: if the threshold dimension were at least 2, then there would be $x\^1, x\^2 \in \MX$ that are threshold shattered via $g_{j, b_1}, g_{j, b_2}$ for some $b_1, b_2 \in \{0,1, \ldots, D-1\}$.
However, $g_{j, b_1}(x\^1) = g_{j, b_2}(x\^1) = 1$ implies that $b_1 = x\^1_j = b_2$ which contradicts $g_{j, b_1}(x\^2) = 0 \ne 1 = g_{j, b_2}(x\^2)$.

On the other hand, we claim that $\Tdim(G(\MH_1, \ldots, \MH_k, \MH_1, \ldots, \MH_k ,\MG_1, \ldots, \MG_k)) \geq D^k$. Now let $\MT_j := \{\tau_{j,0}, \ldots, \tau_{j,D - 1}\}$, where $\tau_{j,b}(x) = \One[b \geq x_j]$. Notice that
$$
G(\MH_1, \ldots, \MH_k, \MH_1, \ldots, \MH_k, \MG_1, \ldots, \MG_k) = \tilde G(\MH_1 \vee \MH_1, \ldots, \MH_k \vee \MH_k, \MG_1, \ldots, \MG_k) \supseteq \tilde G(\MT_1, \ldots, \MT_j, \MG_1, \ldots, \MG_k),
$$
where the inclusion above follows from Lemma \ref{lem:lb-exp-d}: indeed, the lemma implies that for each $b \in \{0,1,\ldots, D-1\}$, there are some $f_1, f_2 \in \MC$ so that $(f_1 \vee f_2)(\cdot) = \One[b \geq \cdot]$. In particular, it follows that $(h_{j,f_1} \vee h_{j,f_2})(x) = \One[b \geq x_j] = \tau_{j,b}(x)$.

It therefore suffices to show that $\tilde G(\MT_1, \ldots, \MT_k, \MG_1, \ldots, \MG_k)$ can realize all threshold functions on $\MX$. Indeed, for any $a = a_1 \cdots a_k \in \MX$, for $a_1, \ldots, a_k \in \{0,1,\ldots, D-1\}$, we have, for each $x \in \MX$,
$$
\One[a \geq x] = \tilde G (\tau_{1,a_1}(x), \ldots, \tau_{k,a_k}(x), g_{1,a_1}(x), \ldots, g_{k,a_k}(x)).
$$
To see that the above holds, simply note that $a \geq x$ if and only if either (a) $a_j \geq x_j$ for each $1 \leq j \leq k$, or (b) for the smallest $j$ such that $a_j \neq x_j$, we have $a_{j'} \geq x_{j'}$ for $1 \leq j' \leq j$.
\end{proof}

\section{Future work}
There is a gap of $\log k$ between the upper bound of Proposition \ref{prop:ldim-ub} and the lower bound of Observation \ref{obs:ldim-lb}, as well as between the exponents in the upper bound of Proposition \ref{prop:tdim-ub} and the lower bound of Proposition \ref{prop:tdim-lb}. It would be interesting to close these gaps. To the best of our knowledge a similar gap exists for the VC dimension \cite{dudley_central_1978}.

\cite{alon_closure_2020} additionally established a similar closure property to the ones considered in this note for the sample complexity of private PAC learning. Their upper bound has a polynomial dependence on $k$; it would be interesting to determine if a stronger upper bound (say, nearly linear in $k$) could be established.

\bibliographystyle{alpha}
\bibliography{refs}

\end{document}